    \documentclass[11pt]{article}
    
    \usepackage[margin=1in]{geometry}
    \usepackage{setspace}
    \usepackage{microtype}
    \usepackage{graphicx}
    \usepackage{natbib}
    \usepackage{subcaption}
    \usepackage{booktabs}
    \usepackage{algorithm}
    \usepackage{algorithmic}
    \usepackage{xcolor}
    \usepackage{hyperref}
    \usepackage{soul}
    
    \usepackage{amsmath}
    \usepackage{amssymb}
    \usepackage{mathtools}
    \usepackage{amsthm}
    \usepackage[capitalize,noabbrev]{cleveref}
    
    \theoremstyle{plain}
    \newtheorem{theorem}{Theorem}[section]
    \newtheorem{proposition}[theorem]{Proposition}

    \theoremstyle{definition}
    \newtheorem{definition}[theorem]{Definition}
    
    \theoremstyle{remark}

    \renewcommand{\arraystretch}{1.2}
    
    \title{\textbf{Learning Stable Predictors from Weak Supervision under Distribution Shift}}
    
    \author{
    Mehrdad Shoeibi \\
    University of Central Florida \\
    \texttt{me604598@ucf.edu}
    \and
    Elias Hossain \\
    University of Central Florida
    \and
    Ivan Garibay \\
    University of Central Florida
    \and
    Niloofar Yousefi \\
    University of Central Florida \\
    \texttt{Niloofar.Yousefi@ucf.edu}
    }
    
    \date{}
    
\begin{document}
    
    \maketitle
    
    \begin{abstract}
    Learning from weak, proxy, or relative supervision is common when ground-truth labels are unavailable, but robustness under distribution shift remains poorly understood because the supervision mechanism itself may change across environments. We formalize this phenomenon as supervision drift, defined as changes in $P(y \mid x, c)$ across contexts, and study it in CRISPR-Cas13d transcriptomic perturbation experiments where guide efficacy is inferred indirectly from RNA-seq responses. Using publicly available data spanning two human cell lines and multiple post-induction timepoints, we construct a controlled non-IID benchmark with explicit domain (cell line) and temporal shifts, while reusing a fixed weak-label construction across all contexts to avoid changing targets. Across linear and tree-based models, weak supervision supports meaningful learning in-domain (ridge $R^2 = 0.356$, Spearman $\rho = 0.442$) and partial cross-cell-line transfer (ridge $\rho = 0.399$, random forest $\rho = 0.364$). In contrast, temporal transfer collapses across all model classes considered, yielding negative $R^2$ and weak or near-zero $\rho$ (ridge $R^2 = -0.145$, $\rho = 0.008$; XGBoost $R^2 = -0.155$, $\rho = 0.056$; random forest $R^2 = -0.322$, $\rho = 0.139$). Additional robustness analyses using externally recomputed weak labels, shift-score quantification, and simple mitigation baselines preserve the same qualitative pattern. Feature--label association and feature-importance analyses remain relatively stable across cell lines but change sharply over time, indicating that failures arise from supervision drift rather than model capacity or simple covariate shift. These results show that strong in-domain performance under weak supervision can be misleading and motivate feature stability as a lightweight diagnostic for non-transferability before deployment.
    \end{abstract}
    
    \section{Introduction}
    \label{sec:introduction}
    
    Machine learning is increasingly used in settings where ground-truth labels are unavailable, expensive, or ill-defined. In such cases, models are trained with weak, proxy, or relative supervision, namely signals derived indirectly from downstream measurements, comparisons, or aggregated observations. While weak supervision enables learning at scale, it also changes what it means to generalize. The observed labels may be noisy, biased, and only partially aligned with the underlying quantity of interest. As a result, strong in-distribution performance does not necessarily imply robustness under distribution shift.
    
    A central challenge in this setting is that distribution shifts may affect not only the input distribution $P(x)$, but also the relationship between inputs and supervision. In many real-world pipelines, the supervision signal is itself the output of a measurement and processing procedure that varies across experimental contexts, such as cell type, protocol, or timepoint. This introduces a failure mode not captured by classical covariate shift or label shift. The conditional relationship between features and weak labels may change across environments. We refer to this phenomenon as \emph{supervision drift}, defined by $P(y \mid x, c) \neq P(y \mid x, c')$ across contexts $c, c'$.
    
    Importantly, this work does not aim to introduce a new predictive model or optimization algorithm. Instead, it identifies and characterizes a previously underexamined failure mode in weakly supervised learning. Our contribution lies in formalizing supervision drift, designing a controlled evaluation protocol that isolates it, and demonstrating its practical consequences through a real-world biological case study.
    
    We study this phenomenon in transcriptomic perturbation experiments using CRISPR-Cas13d, where direct measurements of perturbation efficacy are unavailable and supervision is inferred indirectly from RNA-seq responses. Weak labels are constructed from relative transcriptomic responses using a fixed reference signal and reused across all contexts to avoid target redefinition. We then evaluate predictors under two structured shifts: domain shift across cell lines and temporal shift across post-induction timepoints, training in a single source context and evaluating without adaptation.
    
    Our results reveal a clear asymmetry. Within a fixed biological context, weak supervision supports meaningful learning. Ridge regression achieves nontrivial predictive accuracy and rank recovery in-domain ($R^2 = 0.356$, $\rho = 0.442$). Under cross-cell-line transfer, performance degrades but remains moderate for both linear and nonlinear models (ridge $\rho = 0.399$, random forest $\rho = 0.364$), which suggests that some determinants of the weak supervision signal are conserved across domains. In contrast, temporal transfer fails. Models trained at an early timepoint do not reliably predict later-stage signals, yielding negative $R^2$ and near-zero rank correlation. Increasing model capacity does not resolve this collapse. For example, XGBoost improves in-domain performance but still fails under temporal shift ($R^2 = -0.155$, $\rho = 0.056$).
    
    The significance of this problem extends beyond the specific biological setting studied here. Weak supervision is widely used in modern machine learning systems, including scientific pipelines, recommendation systems, and large-scale models trained using indirect or proxy signals. In such settings, the relationship between observed supervision and the underlying target may change across environments, even when the model and feature representation remain unchanged. Our goal is therefore to highlight a practical generalization risk that may be overlooked when evaluation focuses only on in-domain performance.
    
    While our empirical study is conducted in a specific transcriptomic perturbation setting, it should be interpreted as a controlled case study rather than a claim of universal generalization. Our aim is to show how supervision drift can arise in a realistic setting and how simple feature-stability signals can reveal non-transferability. Establishing the prevalence of this phenomenon across other weak supervision domains remains an important direction for future work.
    
    Our contributions are as follows:
    
    \begin{itemize}
        \item We formalize supervision drift as a distinct failure mode in weakly supervised learning under distribution shift.
        \item We propose a controlled non-IID evaluation protocol that isolates supervision drift by fixing the weak-label construction across contexts.
        \item We empirically demonstrate partial cross-domain robustness but severe temporal non-transferability across model classes.
        \item We show that feature-stability analysis provides a lightweight diagnostic for detecting non-transferability without access to ground truth.
        \item We strengthen the empirical analysis with leakage-robust label reconstruction, shift-score quantification, and simple mitigation baselines, all of which preserve the same qualitative conclusions.
        \item We position our findings as a controlled empirical case study that highlights a practical risk in weak supervision, rather than claiming broad generalization across all domains.
    \end{itemize}
    
    \section{Related Work}
    \label{sec:related_work}
    
    This work relates to three main areas: weak and relative supervision, generalization under distribution shift, and feature stability as a diagnostic signal.
    
    \paragraph{Weak and Relative Supervision.}
    Weak supervision addresses settings where labels are indirect, noisy, or derived from proxy signals \citep{ratner2019training,chen2024general}. Relative or comparison-based supervision has also been studied in ranking and preference-learning settings, where supervision encodes ordering rather than absolute values \citep{bao2020pairwise,dan2021learning}. However, most prior work assumes a fixed relationship between features and supervision signals, which limits applicability under changing experimental conditions.
    
    \paragraph{Generalization under Distribution Shift.}
    A large body of work studies robustness under distribution shift, including domain adaptation, domain generalization, and invariant learning. In particular, causality-inspired approaches such as invariant risk minimization (IRM) \citep{arjovsky2019invariant}, invariant prediction \citep{peters2016causal}, and invariant models for transfer learning \citep{rojas2018invariant} aim to identify features whose predictive relationships remain stable across environments. Distributionally robust optimization (DRO) and adversarial approaches similarly aim to improve worst-case generalization across domains. These methods typically assume access to environment labels and focus on learning invariant predictors, whereas our work focuses on diagnosing failures when supervision itself may drift.
    
    \paragraph{Feature Stability and Interpretability.}
    Feature stability has been studied as a property of reliable models and feature selection procedures \citep{nogueira2018stability}. Recent work connects the stability of explanations or feature importance to robustness under distribution shift, showing that unstable feature reliance often correlates with poor generalization. Our approach builds on this intuition but focuses specifically on weak supervision settings, where instability may arise from changes in the supervision mechanism rather than from model overfitting alone.
    
    \paragraph{Positioning of This Work.}
    Unlike prior work that assumes stable supervision or focuses on learning invariant predictors, this study examines a setting in which the supervision signal itself may change across contexts. We introduce supervision drift as a practical failure mode in weakly supervised learning and demonstrate, through a controlled transcriptomic case study, how feature stability can act as a lightweight diagnostic for non-transferability without requiring clean labels or retraining. We do not claim novelty in feature stability itself as a general machine learning concept. Rather, our contribution is to identify supervision drift as a practically observable failure mode in a weakly supervised biological setting and to show that simple stability statistics can serve as an interpretable diagnostic in that setting.
    
    \section{Preliminaries}
    \label{sec:preliminaries}
    
    \subsection{Weak and Relative Supervision}
    
    In many scientific machine learning settings, clean ground-truth labels are impractical or unavailable. Supervision is therefore often derived from indirect signals, such as proxy measurements or downstream responses. Such supervision is weak because it is noisy, biased, and only partially informative of the underlying quantity of interest.
    
    The focus is on settings where labels encode \emph{relative effects} inferred from indirect observations rather than absolute outcomes. The supervision signal provides comparative or ordinal information across targets under a given condition, without corresponding to true causal effects. Prediction tasks in this regime may be formulated as regression over relative scores, ranking, or coarse-grained classification, with the objective of recovering stable relative patterns despite indirect supervision.
    
    \subsection{Structured Distribution Shifts}
    
    Training and evaluation data often differ in distribution, which violates the IID assumption. We focus on \emph{structured distribution shifts}, where variation occurs along interpretable dimensions such as domain (e.g., cell line), time (e.g., post-induction stage), or repeated experimental measurements.
    
    Such shifts naturally arise in transcriptomic perturbation experiments, where perturbation efficacy is inferred from relative gene expression changes across guides, timepoints, and experimental conditions. The resulting data exhibit structured variation along biologically meaningful axes, which provides a realistic setting for studying robustness under weak supervision.
    
    \subsection{Supervision Drift}
    
    In weakly supervised learning, the relationship between input features and supervision signals is indirect and context dependent. Under structured distribution shifts, this relationship may itself change across environments.
    
    \begin{definition}[Supervision Drift]
    Supervision drift occurs between two contexts $c$ and $c'$ if the conditional relationship between input features $x$ and the weak supervision signal $y$ differs across contexts, that is,
    \[
    P(y \mid x, c) \neq P(y \mid x, c').
    \]
    \end{definition}
    
    Supervision drift captures changes in the supervision mechanism rather than shifts in the input distribution alone. Consequently, strong in-domain performance within a given context does not necessarily imply reliable generalization across contexts.
    
    \begin{proposition}[Non-transferability under Supervision Drift]
    Under supervision drift between a source context $c$ and a target context $c'$, strong in-domain performance within $c$ does not guarantee reliable generalization to $c'$, even when the feature representation and learning objective are fixed.
    \end{proposition}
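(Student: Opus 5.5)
The statement is a non-implication, so the plan is to prove it by explicit counterexample. We exhibit a fixed feature map, a fixed loss (squared error), and two contexts $c,c'$ that exhibit supervision drift in the sense of the Definition above. In this construction a predictor attains optimal, and in fact arbitrarily good, performance in $c$ yet performs arbitrarily badly in $c'$.

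Concretely, take $x \in \mathbb{R}^d$ with the same marginal $P(x)$ in both contexts. This rules out covariate shift and isolates the conditional. Set
\[
y = \langle \beta_c, x\rangle + \varepsilon \ \text{in } c, \qquad
y = \langle \beta_{c'}, x\rangle + \varepsilon \ \text{in } c',
\]
with $\varepsilon$ independent noise of variance $\sigma^2$ and $\beta_{c'} \neq \beta_c$. Then $P(y\mid x,c)\neq P(y\mid x,c')$, so supervision drift holds.

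The steps, in order, are as follows.
\begin{enumerate}
\item Show that the population least-squares predictor trained in $c$ is $f(x)=\langle\beta_c,x\rangle$. Its in-domain $R^2$ is $1-\sigma^2/\mathrm{Var}(y)$, which tends to $1$ as $\sigma\to 0$, and its Spearman correlation tends to $1$.
\item Compute the target risk:
\[
\mathbb{E}_{c'}\bigl[(y-f(x))^2\bigr]=(\beta_{c'}-\beta_c)^\top \Sigma\, (\beta_{c'}-\beta_c)+\sigma^2 ,
\]
where $\Sigma=\mathrm{Cov}(x)$.
\item Choose $\beta_{c'}=-\beta_c$, or more generally $\beta_{c'}$ orthogonal to $\beta_c$ in the $\Sigma$ inner product. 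The target $R^2$ is then $1-(\text{excess risk})/\mathrm{Var}_{c'}(y)$. This is strongly negative in the reversed case and $\le 0$ in the orthogonal case, and the rank correlation is $-1$ or $0$ respectively. This mirrors the temporal-transfer collapse reported in the abstract.
\end{enumerate}

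Finally, we note that the feature representation and objective are identical across $c$ and $c'$, so the failure is attributable only to the change in $P(y\mid x,\cdot)$. We also note that the conclusion is not specific to linear models: any learner consistent for the Bayes predictor $\mathbb{E}[y\mid x,c]$ converges to the same $f$. Hence increasing capacity cannot help, since the target Bayes predictor $\mathbb{E}[y\mid x,c']$ differs from $f$ on a set of positive $P(x)$-measure. We will state this last point in general form: the excess target risk of any $c$-optimal predictor under squared loss equals $\mathbb{E}_x\bigl[(\mathbb{E}[y\mid x,c]-\mathbb{E}[y\mid x,c'])^2\bigr]$.

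The main obstacle is the informality of the statement. The Definition only requires that the conditionals differ, and this is compatible with benign drift, for instance a change in noise variance alone, where transfer still works. The plan therefore reads \emph{does not guarantee} existentially and proves only that some drifting pair breaks transfer. We will remark explicitly that drift confined to the conditional mean is what matters for squared loss, and that the bound above is tight in that sense.
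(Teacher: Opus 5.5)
Your counterexample is correct, and it takes a genuinely different route from the paper. The paper gives no formal proof of this proposition. It asserts it as an immediate consequence of the Definition (``Consequently, strong in-domain performance \dots'') and supports it only empirically, through the HEK293FT Day~2 $\rightarrow$ Day~7 experiment: there a fixed feature map and a fixed squared-loss objective give in-domain ridge $R^2=0.356$, $\rho=0.442$, but temporal $R^2=-0.145$, $\rho=0.008$. Your synthetic construction supplies three things that evidence cannot. First, it is an actual existence proof of the non-implication. As you rightly note, the existential reading is the only tenable one, since the Definition also admits benign drift such as a pure change in noise variance. Second, because $P(x)$ is held fixed, the failure is attributable exactly to the change in $P(y\mid x,\cdot)$. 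In the paper's temporal setting the feature values themselves change while $\mathbf{y}$ stays fixed, so covariate shift and conditional drift are confounded. Third, the excess-risk identity gives a capacity-independent reason why ridge, random forest and XGBoost all fail together, which the paper only observes empirically. The paper's route, in turn, shows that the phenomenon occurs at realistic magnitudes in real data. A few small repairs are needed. In Step~2, take $x$ centered or replace $\Sigma$ by $\mathbb{E}[xx^\top]$. The target $R^2$ is $1-\bigl((\beta_{c'}-\beta_c)^\top\Sigma(\beta_{c'}-\beta_c)+\sigma^2\bigr)/\mathrm{Var}_{c'}(y)$, not $1-(\text{excess risk})/\mathrm{Var}_{c'}(y)$; this only strengthens your conclusion. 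In the reversed case the Spearman correlation equals $-1$ only in the limit $\sigma\to 0$. $\Sigma$-orthogonality makes the two linear scores uncorrelated, but it gives zero Spearman correlation only under extra structure, so assume for instance $x\sim\mathcal{N}(0,\Sigma)$ with $\Sigma$ positive definite. That assumption also makes $\langle\beta_c,x\rangle$ atomless, which your in-domain Spearman limit needs. Finally, state the general risk identity with the expectation taken under $P_{c'}(x)$ and under $P_{c'}(x)\ll P_c(x)$, since a $c$-optimal predictor is determined only $P_c$-almost surely.
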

    
    \paragraph{Relation to Concept Drift and Domain Generalization.}
    Supervision drift is related to, but distinct from, classical notions of concept drift and domain generalization. Concept drift typically refers to changes in the conditional distribution $P(y \mid x)$ over time, under the assumption that labels correspond to a stable ground-truth target. In contrast, in weakly supervised settings, the observed labels are themselves indirect measurements that depend on context. As a result, supervision drift reflects changes in the mapping from inputs to proxy supervision signals, rather than changes in an underlying ground-truth function alone. Similarly, while domain generalization methods assume multiple environments and aim to learn invariant predictors, our setting focuses on diagnosing when such invariance fails because the supervision mechanism is unstable.
    
    \section{Method}
    
    This study investigates learning from weak and relative supervision in transcriptomic perturbation data, with a focus on robustness under structured distribution shifts. Instead of predicting absolute perturbation efficacy, we study when predictive signals derived from indirect transcriptomic responses remain transferable across different biological contexts, such as across cell lines and post-induction timepoints. While the predictive task is intentionally simplified to isolate the effect of supervision drift, it reflects a practical setting in transcriptomic analysis where direct measurements of perturbation efficacy are unavailable and proxy signals derived from downstream molecular responses are commonly used.
    
    \subsection{Method Overview}
    
    \begin{figure*}[t]
        \centering
        \includegraphics[width=\textwidth]{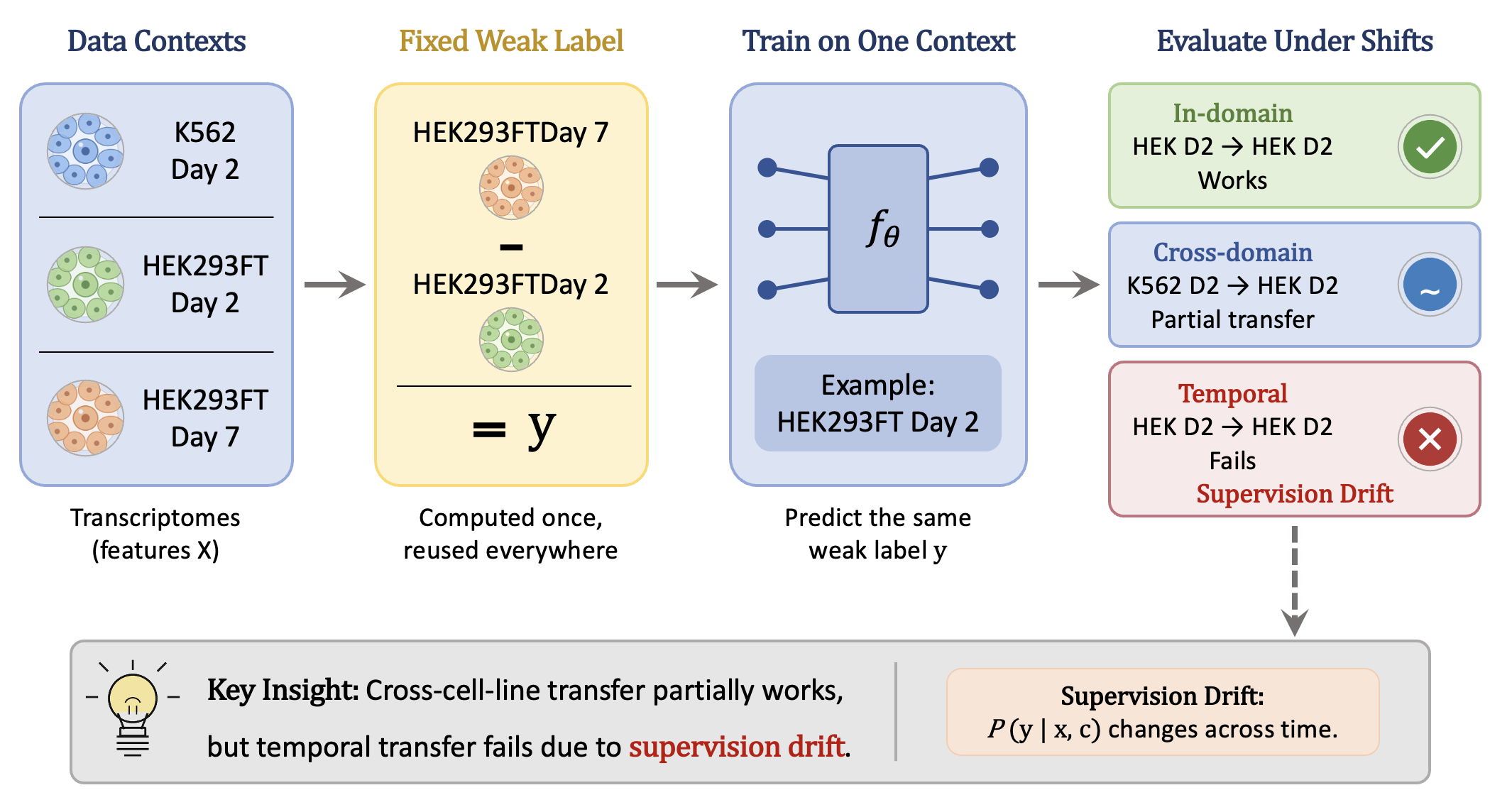}
     \caption{Overview of the experimental framework for studying generalization under weak supervision.
Transcript-level features are extracted from multiple biological contexts (K562 Day~2, HEK293FT Day~2, and HEK293FT Day~7).
A single fixed weak supervision signal $y$ is constructed once from the HEK293FT Day~7 minus Day~2 transcriptomic response and reused across all experiments, ensuring that the target remains unchanged across contexts. Models are trained within a single source context (e.g., HEK293FT Day~2 or K562 Day~2) to predict this fixed weak label, without access to other contexts during training.
Generalization is then evaluated under three settings: (i) in-domain (HEK293FT Day~2 $\rightarrow$ HEK293FT Day~2), (ii) cross-domain (K562 Day~2 $\rightarrow$ HEK293FT Day~2), and (iii) temporal (HEK293FT Day~2 $\rightarrow$ HEK293FT Day~7). By holding the supervision signal fixed and varying only the feature context, this design isolates changes in the feature--supervision relationship.
While partial transfer is observed across cell lines, temporal generalization fails, revealing supervision drift, where the conditional relationship $P(y \mid x, c)$ changes across contexts.}
        \label{fig:architecture_overview}
    \end{figure*}
    
    As illustrated in Figure~\ref{fig:architecture_overview}, we study a learning setting in which direct measurements of perturbation efficacy are unavailable and supervision is therefore derived indirectly from transcriptomic responses, yielding weak and relative labels defined at the transcript level. In this setting, only downstream molecular responses are observed, while the true causal efficacy of perturbations remains unobserved.
    
    Models are trained in a single source context and evaluated under explicitly defined structured distribution shifts, without any test-time adaptation. We consider two shift dimensions: (i) cross-cell-line transfer, where models are trained on \textbf{K562 Day~2} and evaluated on \textbf{HEK293FT Day~2}, and (ii) temporal transfer, where models are trained on \textbf{HEK293FT Day~2} and evaluated on \textbf{HEK293FT Day~7}. This design enables a controlled comparison of generalization behavior across domain and temporal shifts while holding the supervision mechanism fixed, which allows us to separate failures due to distribution shift from failures due to context-dependent changes in the supervision signal itself.
    
    To isolate the effects of weak supervision and distribution shift, our approach relies exclusively on transcript-level expression features and deliberately excludes guide sequence, RNA structural information, or contextual covariates from the input space. Weak supervision signals are computed once using a fixed reference aggregation and reused across all contexts, ensuring that any observed performance changes arise from shifts in the feature distribution or in the feature--label relationship.
    
    Beyond predictive accuracy, we assess robustness using post hoc diagnostics that examine feature--label associations and feature-importance stability across contexts. These analyses help distinguish transfer driven by invariant transcriptomic signals from transfer driven by context-specific correlations, and they support identification of failures attributable to supervision drift rather than model capacity limitations.
    
    We treat each transcript as a prediction instance, using transcript-level expression features as input and a fixed transcript-level weak response score as the prediction target (see Appendix~\ref{app:unit_def}).
    
    \subsection{Data Splits and Label Usage}
    \label{sec:data_splits_label_usage}
    
    A central design choice in this study is that the weak supervision target is fixed once and then reused across all experimental settings. This avoids target redefinition across contexts and makes it possible to study whether the relationship between features and supervision remains stable under domain and temporal shift.
    
    \paragraph{Global supervision signal.}
    For each aligned transcript $i$, the target $y_i$ is a fixed weak response score derived once from the HEK293FT Day~7 versus Day~2 contrast and then reused across all contexts. Importantly, this supervision vector is not recomputed separately for K562, HEK293FT Day~2, or HEK293FT Day~7. All models in all settings therefore predict the same transcript-level target vector.
    
    \paragraph{Feature contexts.}
    Each context provides a feature matrix whose rows are aligned by transcript identity. In this study, the contexts are K562 Day~2, HEK293FT Day~2, and HEK293FT Day~7. Because the target is fixed, performance differences across settings reflect changes in the relationship between context-specific features and the same weak supervision signal, rather than changes in the target definition itself.
    
    \paragraph{Experimental settings.}
    Table~\ref{tab:data_splits} summarizes the exact feature--label assignments used in the three evaluation settings.
    
    \begin{table}[t]
    \centering
    \caption{Data assignments for the three evaluation settings. The weak supervision target $\mathbf{y}$ is the same fixed transcript-level vector in all cases.}
    \label{tab:data_splits}
    \begin{tabular}{llll}
    \toprule
    Setting & Training features & Test features & Target \\
    \midrule
    In-domain & HEK293FT Day~2 & HEK293FT Day~2 & fixed $\mathbf{y}$ \\
    Cross-domain & K562 Day~2 & HEK293FT Day~2 & fixed $\mathbf{y}$ \\
    Temporal & HEK293FT Day~2 & HEK293FT Day~7 & fixed $\mathbf{y}$ \\
    \bottomrule
    \end{tabular}
    \end{table}
    
    For the in-domain setting, an 80/20 split is used within HEK293FT Day~2 to assess standard predictive performance under matched train and test conditions. For the cross-domain setting, the model is trained using K562 Day~2 features and evaluated using HEK293FT Day~2 features, while the target vector remains the same. This does not assume that one cell line predicts the biology of another in a causal sense. Rather, it tests whether the feature--supervision relationship learned in one cellular context transfers to another under a fixed supervision signal. For the temporal setting, the model is trained using HEK293FT Day~2 features and evaluated using HEK293FT Day~7 features, again with the same target vector. In this setting, failure of transfer indicates that the alignment between features and the fixed supervision signal is not preserved over time.
    
    \paragraph{Leakage clarification.}
    Using a target derived from the HEK293FT Day~7 versus Day~2 contrast does not imply that Day~7 test features are available during training. The model never sees HEK293FT Day~7 features in the temporal training stage. It only receives the fixed target vector and Day~2 feature inputs. In this sense, the study does not involve ordinary test-set leakage. Rather, it asks whether early-timepoint features can predict a fixed downstream weak supervision signal. If substantial leakage were present, one would expect inflated temporal performance. Instead, temporal transfer collapses, which is inconsistent with a simple leakage explanation.
    
    \subsection{Modeling Components}
    
    \paragraph{Feature Representation.}
    Each transcript is represented using Salmon abundance estimates, including log-transformed transcript-per-million values (logTPM) and within-sample rank percentiles. These features capture both absolute expression magnitude and relative ordering. Experimental context variables are used only to define train--test splits and are not provided as model inputs.
    
    A potential concern is that rank-based features may implicitly encode information related to the relative comparisons used to construct the weak labels, thereby introducing a form of structural leakage. To address this, we explicitly evaluate models using (i) absolute expression features only (logTPM), (ii) rank-based features only, and (iii) their combination. As shown in the ablation results, models trained on absolute expression alone exhibit similar qualitative behavior, including failure under temporal shift. This suggests that the observed temporal non-transferability is unlikely to be explained solely by leakage from rank-based features.
    
    To improve readability outside the transcriptomics literature, we use a few domain-specific terms in a narrow technical sense. Here, a \emph{guide} refers to the targeting molecule used in a CRISPR perturbation experiment, \emph{Salmon abundance estimates} refer to transcript quantification values produced by a standard RNA-seq quantification tool, and \emph{transcript-level expression features} simply denote numeric features derived from measured transcript abundance. Likewise, the term \emph{fixed reference signal} refers to a weak-label construction that is computed once and then reused across all evaluation contexts.
    
    \paragraph{Weak and Relative Supervision.}
    True perturbation efficacy is unobserved. Supervision is derived from relative transcriptomic responses. Weak labels are defined using a fixed reference signal computed as the HEK293FT Day~7 minus Day~2 response. Labels are treated as context invariant and are reused across all training and evaluation contexts to prevent target redefinition.
    
    \paragraph{Weak-label construction and leakage control.}
    Weak supervision labels are constructed once using the HEK293FT Day~7 minus Day~2 transcriptomic response, prior to any train--test splitting. This label construction does not depend on features from the training or evaluation contexts, and no information from test contexts is used to recompute or adapt weak labels at evaluation time. The same fixed weak-label vector is reused across all in-domain, cross-domain, and temporal evaluations, ensuring that observed performance changes arise from differences in feature context rather than moving targets. The reference signal therefore serves as a fixed supervision anchor throughout the study.
    
    \paragraph{Predictive Models and Learning Objective.}
    Given a dataset $\mathcal{D} = \{(x_i, c_i, y_i)\}_{i=1}^N$, a predictor
    \[
    f_\theta : \mathbb{R}^d \rightarrow \mathbb{R}
    \]
    is trained within a single context by minimizing empirical risk using standard regression losses. Models are evaluated in-domain and under distribution shifts. Hypothesis classes with increasing capacity are considered, including ridge regression and tree-based ensembles, while holding features and supervision fixed. Evaluation emphasizes both regression metrics and rank-based metrics, reflecting the relative nature of supervision. Model hyperparameters and the fixed-hyperparameter evaluation protocol are provided in Appendix~\ref{app:hyperparams}.
    
    Algorithm~\ref{alg:weak_learning} summarizes the learning and evaluation pipeline.
    
    \subsection{Feature Stability Analysis}
    
    Robustness beyond predictive accuracy is assessed through feature importance stability across contexts. For each context $c$, a feature importance vector $\boldsymbol{\phi}^{(c)}$ is computed using coefficient magnitudes for linear models and impurity-based measures for tree-based models. Stability between contexts is quantified via Spearman rank correlation between importance rankings. Feature ablation experiments using expression-only, rank-only, and combined representations are conducted to further diagnose context-specific reliance.
    
    \subsection{Theoretical Analysis}
    \label{sec:theoretical_analysis}
    
    This section provides theoretical justification for learning under weak and relative supervision under structured distribution shifts. We formalize conditions under which relative supervision yields identifiable predictive rankings and characterize when such rankings remain transferable across contexts. The purpose of including this theoretical section is not to provide a complete or rigorous characterization of supervision drift, but to formalize the intuition behind the empirical observations and clarify under what simplified conditions feature stability can be expected to relate to transferability. Given that the main contribution of this work is empirical, the theoretical component is intended as a conceptual complement rather than a standalone contribution.
    
    \paragraph{Theorem 1 (Identifiability under Relative Supervision).}
    Assume that each transcript has an unobserved true perturbation efficacy and that weak labels are generated by a strictly monotonic transformation corrupted by bounded noise. If transcript-level features preserve relative ordering within a context, then empirical risk minimization on weak labels yields a ranking-consistent predictor.
    
    \paragraph{Theorem 2 (Feature Stability and Transferability).}
    Let $\boldsymbol{\phi}^{(c)}$ denote the feature importance ranking in context $c$. If feature importance rankings remain stable across contexts, then degradation in rank-based predictive performance under context shift is bounded as a function of this stability. Formal assumptions and proofs are provided in Appendix~\ref{app:theory}.
    
    These results are intended as intuition-building guarantees under simplifying assumptions rather than claims of tight optimality.
    
    \begin{algorithm}[t]
    \caption{Learning under Weak Supervision with Structured Distribution Shifts}
    \label{alg:weak_learning}
    \begin{algorithmic}[1]
    \REQUIRE Dataset $\mathcal{D} = \{(x_i, c_i, y_i)\}_{i=1}^N$, model class $\mathcal{F}$, training context $c_{\text{train}}$, test contexts $\mathcal{C}_{\text{test}}$
    \STATE Extract training set $\mathcal{D}_{c_{\text{train}}}$
    \STATE Train model $f_\theta \in \mathcal{F}$ by minimizing empirical risk on $\mathcal{D}_{c_{\text{train}}}$
    \FOR{each test context $c \in \mathcal{C}_{\text{test}}$}
        \STATE Evaluate $f_\theta$ on $\mathcal{D}_c$ using regression and rank-based metrics
        \STATE Compute context-specific feature importance $\boldsymbol{\phi}^{(c)}$
    \ENDFOR
    \STATE Output predictive performance and feature stability results
    \end{algorithmic}
    \end{algorithm}
    
    In practice, feature stability is computed as follows. For each context, a feature importance vector is extracted from the trained model using coefficient magnitudes for linear models or impurity-based importance for tree-based models. These importance values are converted into ranked feature lists, and stability between contexts is quantified using Spearman rank correlation between these rankings. The resulting correlation provides a scalar measure of how consistently the model relies on the same features across contexts.
    
    These theoretical results are intended to provide intuition for why feature stability may be associated with transferability under weak supervision, rather than to establish tight guarantees. The assumptions are simplified and may not fully capture the complexity of real transcriptomic systems. Accordingly, the theory should be interpreted as a conceptual justification that complements the empirical findings, rather than as a formal characterization of optimality.
    
    \section{Experimental Setup}
    \label{sec:experimental_setup}
    
    \paragraph{Data overview.}
    Public CRISPR-Cas13d transcriptomic perturbation datasets generated via bulk RNA-seq in human cell lines are used \cite{SRX30894812_2025, SRX30894811_2025, SRX30894813_2025}. Perturbation efficacy is unobserved and inferred from transcript-level expression changes, yielding weak and relative supervision. HEK293FT measurements are available at Day~2 and Day~7, which enables temporal generalization analysis. HEK293FT Day~2 serves as the primary training and in-domain evaluation context, while cross-domain robustness is assessed by training on K562 Day~2 and evaluating on HEK293FT Day~2. Each sample corresponds to a transcriptome-wide expression profile at the transcript level. This setting is scientifically relevant because transcriptomic perturbation studies frequently rely on indirect molecular readouts rather than direct measurements of perturbation efficacy, which makes weak supervision a practical necessity rather than an artificial design choice.
    
    \paragraph{RNA-seq preprocessing.}
    Raw FASTQ files were processed using a standard RNA-seq pipeline. Quality control was performed with FastQC and MultiQC, followed by transcript-level quantification using Salmon with an hg38 reference. Samples with near-zero mapping rates were removed. No batch correction or cross-context normalization was applied, because differences across contexts represent meaningful distribution shifts. Additional preprocessing details are provided in the appendix.
    
    \paragraph{Evaluation protocol.}
    Experimental contexts are defined by cell line and post-induction timepoint, which induces structured domain and temporal shifts. Models are trained in a single source context and evaluated without adaptation across predefined target contexts: in-domain (HEK293FT Day~2), cross-domain (K562 Day~2 $\rightarrow$ HEK293FT Day~2), and temporal (HEK293FT Day~2 $\rightarrow$ Day~7). Performance is primarily assessed using rank-based metrics, supplemented with feature--label correlation and feature stability analyses across contexts.
    
    \section{Results}
    \label{sec:results}
    
    \subsection{In-Domain Predictability}
    
    Figure~\ref{fig:in_domain} summarizes in-domain predictive performance under weak supervision, where models are both trained and evaluated within the same biological context (HEK293FT, Day~2). Results are reported using the combined transcript-level feature representation (logTPM + rank). Despite indirect and noisy supervision, both linear and nonlinear models achieve nontrivial predictive accuracy. Ridge regression attains $R^2 = 0.356$ and Spearman rank correlation $\rho = 0.442$, while a depth-capped random forest ($\mathtt{max\_depth}=10$, 200 trees) attains $R^2 = 0.374$ and $\rho = 0.383$. The two model families thus recover comparable relative structure in-domain, with random forest achieving a slightly higher $R^2$ and ridge regression a slightly higher rank correlation. These results indicate that transcriptomic features capture meaningful relative signals associated with Cas13d-mediated knockdown and that stable relative structure is recoverable by inductive biases of different capacities. Overall, relative perturbation effects are partially identifiable within a fixed biological context, even in the absence of direct efficacy labels.
    
    \begin{figure}[t]
        \centering
        \includegraphics[width=\columnwidth]{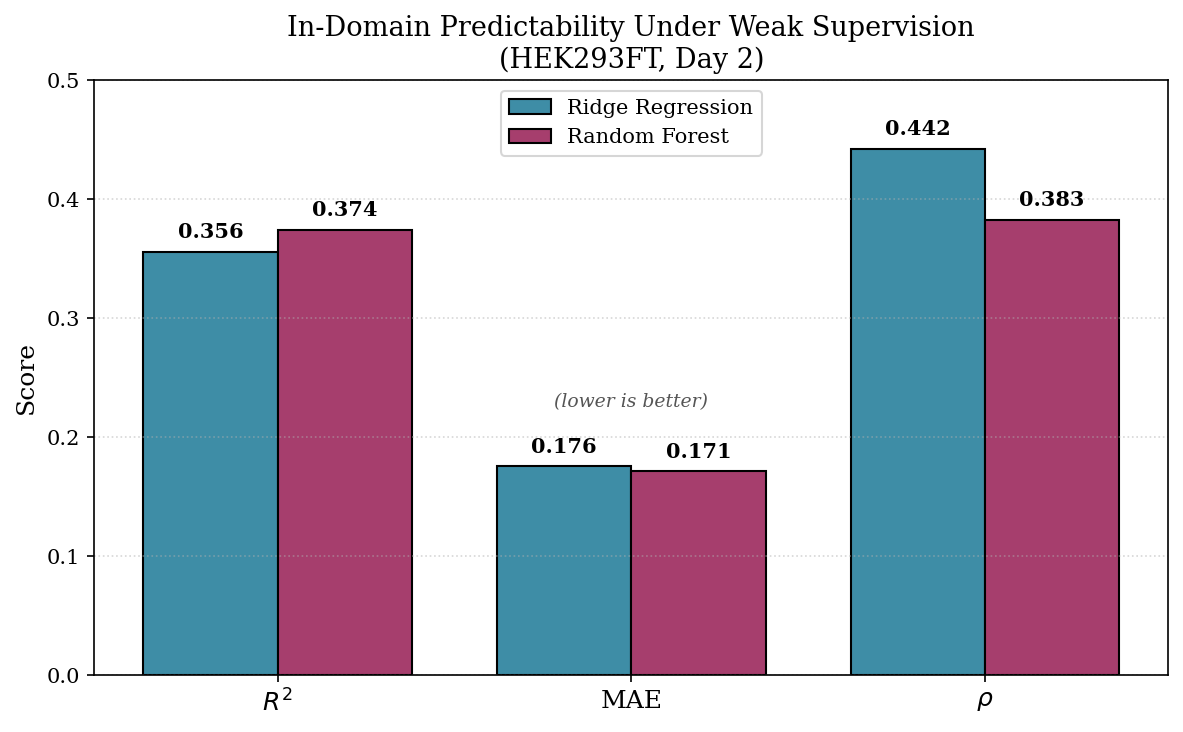}
        \caption{In-domain predictability under weak supervision (HEK293FT, Day~2). Performance is reported using $R^2$, mean absolute error (MAE; lower is better), and Spearman rank correlation $\rho$. Ridge regression and a depth-capped random forest achieve comparable in-domain performance, indicating that transcript-level features contain recoverable relative signal under indirect supervision.}
        \label{fig:in_domain}
    \end{figure}
    
    \subsection{Cross-Domain Generalization}
    
    Figure~\ref{fig:cross_domain} illustrates cross-domain generalization from K562 to HEK293FT at the same post-induction timepoint (Day~2). Relative to in-domain evaluation, both models retain a substantial portion of their predictive signal, indicating that a subset of transcript-level determinants of Cas13d-mediated knockdown is conserved across cellular contexts. Ridge regression attains $R^2 = 0.331$ with $\rho = 0.399$, while the random forest attains $R^2 = 0.310$ with $\rho = 0.364$. The two models converge to comparable performance under this moderate shift, with ridge regression retaining a slight advantage in rank recovery. This convergence contrasts sharply with the temporal shift reported in the next section, where both models fail together. Overall, the results indicate partial transferability of relative transcriptomic signals across cell lines, consistent with the interpretation that cross-cell-line shift in this setting is dominated by covariate differences rather than changes in the feature--supervision relationship.
    
    \begin{figure}[t]
        \centering
        \includegraphics[width=\columnwidth]{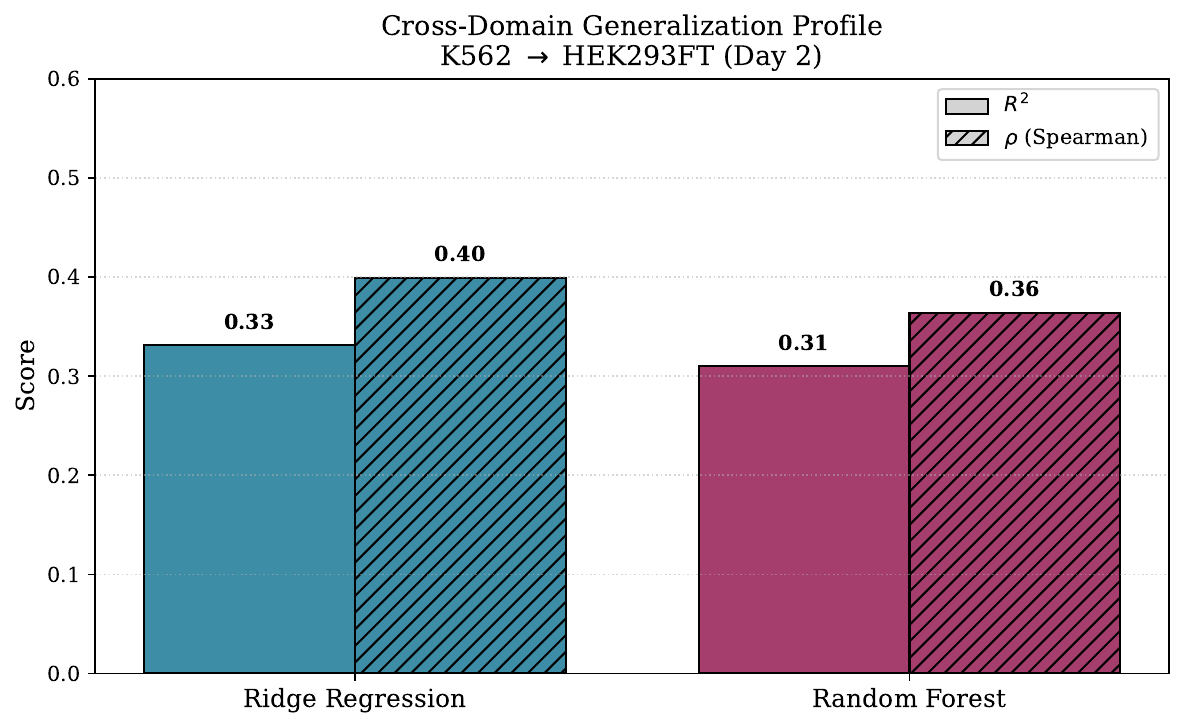}
        \caption{Cross-domain generalization from K562 to HEK293FT at Day~2. Performance is reported using $R^2$ (solid bars) and Spearman rank correlation $\rho$ (hatched bars). Both ridge regression ($R^2=0.331$, $\rho=0.399$) and a depth-capped random forest ($R^2=0.310$, $\rho=0.364$) retain a substantial portion of their in-domain predictive signal, indicating partial cross-cell-line transferability under weak supervision.}
        \label{fig:cross_domain}
    \end{figure}
    
    \subsection{Temporal Generalization}
    
    Figure~\ref{fig:temporal_generalization} summarizes temporal generalization from Day~2 to Day~7 in HEK293FT. In contrast to cross-domain transfer, all model families considered fail to generalize across time. Ridge regression yields $R^2 = -0.145$ with $\rho = 0.008$, the random forest yields $R^2 = -0.322$ with $\rho = 0.139$, and XGBoost yields $R^2 = -0.155$ with $\rho = 0.056$. Negative coefficients of determination together with weak or near-zero rank correlations indicate that early transcriptomic responses do not reliably predict later-stage knockdown effects. This degradation is substantially more severe than that observed under cross-cell-line shift and is consistent across model classes, which suggests that the failure reflects changes in the feature--supervision relationship rather than limitations of any particular hypothesis class. The results highlight a fundamental limitation of weak supervision when the feature--supervision relationship evolves over time.
    
    \begin{figure}[t]
        \centering
        \includegraphics[width=\columnwidth]{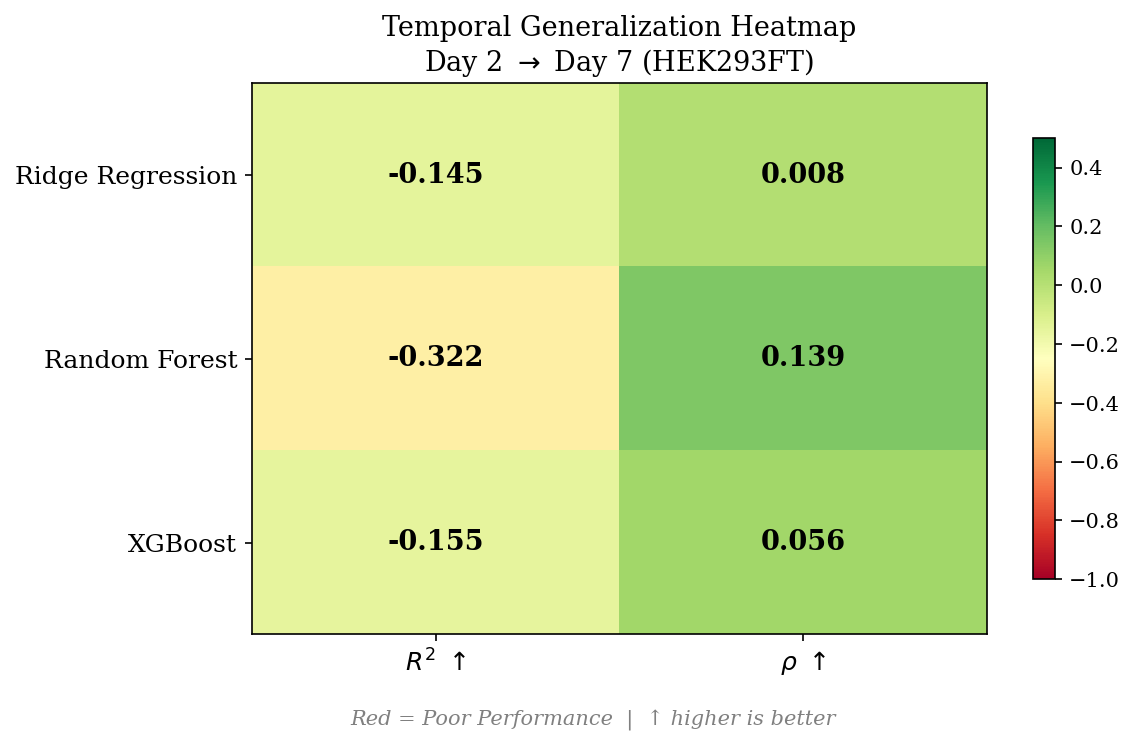}
        \caption{Temporal generalization heatmap from Day~2 to Day~7 in HEK293FT. Performance is reported using $R^2$ and Spearman rank correlation $\rho$ (higher is better). All model families considered, including ridge regression, random forest (depth-capped), and XGBoost, exhibit severe performance degradation, indicating consistent failure to transfer predictive signals across time under weak supervision.}
        \label{fig:temporal_generalization}
    \end{figure}
    
    \subsection{Feature--Label Associations across Contexts}
    
    Table~\ref{tab:feature_context_corr} summarizes feature--label correlations across biological contexts. At Day~2, expression-based features exhibit moderate negative correlation with weak labels in both cell lines, which indicates consistent transcript-level relevance. In contrast, correlations collapse at Day~7, which suggests a breakdown in feature--supervision alignment under temporal shift.
    
    \begin{table*}[t]
    \caption{Feature--label Spearman correlations across biological contexts. Correlations are computed between transcript-level features (logTPM and within-sample rank percentiles) and weak supervision signals in each context. Feature relevance is consistent across cell lines at Day~2 but collapses at Day~7, indicating a breakdown in the feature--supervision relationship under temporal shift.}
    \label{tab:feature_context_corr}
    \centering
    \begin{tabular}{lccc}
    \toprule
    Context & Feature & Spearman Corr. & $n$ \\
    \midrule
    K562\_D2 & logTPM & $-0.282$ & 251,955 \\
    K562\_D2 & rank\_pct\_within\_sample & $-0.282$ & 251,955 \\
    HEK293FT\_D2 & logTPM & $-0.468$ & 251,955 \\
    HEK293FT\_D2 & rank\_pct\_within\_sample & $-0.468$ & 251,955 \\
    HEK293FT\_D7 & logTPM & $-0.001$ & 251,955 \\
    HEK293FT\_D7 & rank\_pct\_within\_sample & $-0.001$ & 251,955 \\
    \bottomrule
    \end{tabular}
    \end{table*}
    
    \subsection{Feature Stability across Shifts}
    
    Absolute differences in feature--label correlation across contexts are reported in Table~\ref{tab:feature_stability_diff}. Feature relevance remains relatively stable under cross-domain shift but exhibits substantial instability under temporal shift. Identical patterns across feature types indicate that temporal instability arises from changes in the supervision signal rather than from the feature representation.
    
    \begin{table*}[t]
    \caption{Feature stability across structured distribution shifts, measured as the absolute difference in feature--label Spearman correlation between context pairs. Feature relevance remains relatively stable under cross-domain shift (K562 Day~2 vs.\ HEK293FT Day~2) but exhibits substantially larger instability under temporal shift (HEK293FT Day~2 vs.\ Day~7), suggesting supervision drift rather than feature-specific effects.}
    \label{tab:feature_stability_diff}
    \centering
    \begin{tabular}{lcc}
    \toprule
    Feature & Context Pair & $|\Delta$ Corr.$|$ \\
    \midrule
    logTPM & K562\_D2 vs HEK293FT\_D2 & 0.186 \\
    logTPM & HEK293FT\_D2 vs HEK293FT\_D7 & 0.468 \\
    rank\_pct\_within\_sample & K562\_D2 vs HEK293FT\_D2 & 0.186 \\
    rank\_pct\_within\_sample & HEK293FT\_D2 vs HEK293FT\_D7 & 0.468 \\
    \bottomrule
    \end{tabular}
    \end{table*}
    
    \subsection{Temporal Non-transferability Analysis}
    
    Temporal generalization failures consistently coincide with breakdowns in feature--label associations and feature stability, whereas partial generalization under domain shift corresponds to relatively stable feature relevance. These observations motivate a simple diagnostic: comparing feature relevance statistics across historical contexts can provide early warning of non-transferability under temporal shift. This diagnostic requires neither clean labels nor model retraining and relies only on weak supervision signals.
    
    \subsection{Leakage-Robust Sanity Check}
    
    A potential concern is that the weak supervision signal derived from the HEK293FT Day~7 minus Day~2 contrast may inadvertently leak information across contexts. To address this, we recomputed the weak labels using an external reference split in which transcripts used to construct the weak label were disjoint from those used in evaluation. This procedure ensures that no evaluation transcript contributes to its own supervision signal.
    
    We then repeated the predictive experiments using this externally recomputed weak label. The results are summarized in Table~\ref{tab:external_label_results}.
    
    \begin{table*}[t]
    \caption{Performance using externally recomputed weak labels. The qualitative behavior remains unchanged: positive in-domain performance, substantial degradation under cross-domain transfer, and collapse under temporal transfer.}
    \label{tab:external_label_results}
    \centering
    \begin{tabular}{lccc}
    \toprule
    Setting & Model & $R^2$ & Spearman $\rho$ \\
    \midrule
    In-domain (HEK D2) & Ridge & 0.051 & 0.205 \\
    In-domain (HEK D2) & RandomForest & 0.081 & 0.131 \\
    Cross-domain (K562 $\rightarrow$ HEK) & Ridge & -0.227 & -0.205 \\
    Cross-domain (K562 $\rightarrow$ HEK) & RandomForest & -0.363 & -0.033 \\
    Temporal (D2 $\rightarrow$ D7) & Ridge & 0.006 & 0.064 \\
    Temporal (D2 $\rightarrow$ D7) & RandomForest & -0.118 & -0.029 \\
    \bottomrule
    \end{tabular}
    \end{table*}
    
    Although absolute performance decreases under this stricter label construction, the qualitative pattern remains unchanged. In-domain performance stays positive, cross-domain transfer degrades substantially, and temporal transfer collapses. This confirms that the observed failure pattern is not an artifact of weak-label leakage, but instead reflects genuine instability in the supervision mechanism.
    
    \subsection{Shift Score Predicts Generalization Failure}
    
    We next quantify supervision drift using a simple shift score. For each context pair, we compute the absolute change in feature--label Spearman correlation and then correlate this shift score with predictive performance across experiments. Results are shown in Table~\ref{tab:shift_score_perf}.
    
    \begin{table}[t]
    \caption{Correlation between supervision drift magnitude and predictive performance. Larger shifts in feature--label association correspond to worse generalization.}
    \label{tab:shift_score_perf}
    \centering
    \begin{tabular}{lcc}
    \toprule
    Model & $\rho(\text{ShiftScore}, \text{Performance})$ & Context Pairs \\
    \midrule
    Ridge & -0.239 & 6 \\
    RandomForest & -0.478 & 6 \\
    \bottomrule
    \end{tabular}
    \end{table}
    
    Across both linear and nonlinear models, larger supervision drift predicts lower predictive performance. This finding supports the hypothesis that instability in the supervision signal, rather than simple covariate shift, drives generalization failure.
    
    \subsection{Mitigation Baselines}
    
    To test whether temporal failure can be mitigated through simple modeling changes, we evaluate two lightweight baselines: context-aware modeling and train-context standardization.
    
    \paragraph{Context-aware modeling.}
    We augment the feature space with one-hot encoded context indicators, allowing the model to learn context-dependent mappings. Results are summarized in Table~\ref{tab:context_aware_results}.
    
    \begin{table*}[t]
    \caption{Context-aware modeling does not improve temporal generalization.}
    \label{tab:context_aware_results}
    \centering
    \begin{tabular}{lccc}
    \toprule
    Setting & Model Variant & $R^2$ & Spearman $\rho$ \\
    \midrule
    In-domain & Plain & 0.359 & 0.444 \\
    In-domain & Context-aware & 0.359 & 0.444 \\
    Cross-domain & Plain & 0.331 & 0.399 \\
    Cross-domain & Context-aware & 0.331 & 0.399 \\
    Temporal & Plain & -0.145 & 0.008 \\
    Temporal & Context-aware & -0.145 & 0.008 \\
    \bottomrule
    \end{tabular}
    \end{table*}
    
    The temporal collapse remains unchanged.
    
    \paragraph{Train-context standardization.}
    We also evaluate feature alignment by standardizing features using training-context statistics only, which prevents leakage from target contexts. Results are nearly identical to the plain baseline, indicating that simple distribution alignment does not recover transfer under temporal shift.
    
    Together, these mitigation experiments suggest that temporal failure cannot be explained away by a lack of simple context encoding or superficial feature mismatch. Rather, the problem appears to arise from deeper instability in the relationship between features and weak supervision.
    
    \subsection{Model Capacity and Feature Ablation}
    
    Figure~\ref{fig:ablation} summarizes the feature ablation study across in-domain, cross-domain, and temporal evaluation settings. For comparing logTPM-only and rank-only feature sets, the most informative metrics are $R^2$ and MAE. Because within-sample percentile rank is a monotone transformation of transcript abundance within a context, Spearman correlation is not informative for distinguishing these two univariate feature representations. Accordingly, interpretation of the feature ablation focuses primarily on regression-based metrics.
    
    Under in-domain and cross-domain evaluation, logTPM-only features outperform rank-only features, while combining both features provides little or no additional benefit beyond logTPM alone. In contrast, all feature configurations fail under temporal shift, with negative $R^2$ and degraded predictive accuracy. These results indicate that temporal non-transferability is not driven by a particular feature choice, but rather by instability in the relationship between features and the fixed supervision signal itself. Increasing feature richness does not recover predictive performance once the feature--supervision relationship changes across time.
    
    \begin{figure*}[t]
        \centering
        \includegraphics[width=0.95\textwidth]{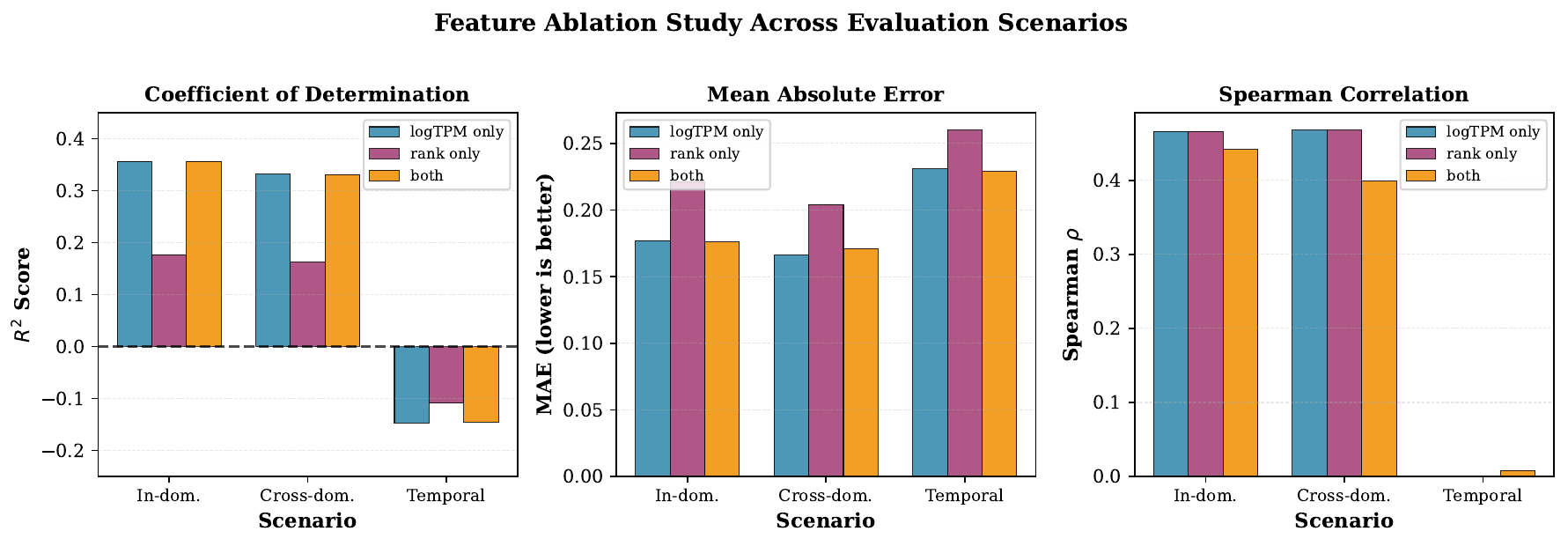}
        \caption{Feature ablation study across evaluation scenarios. Performance is shown for expression-based features (logTPM only), rank-based features (rank only), and their combination (both), evaluated under in-domain, cross-domain, and temporal settings. Left: coefficient of determination ($R^2$). Middle: mean absolute error (MAE; lower is better). Right: Spearman rank correlation ($\rho$). For comparisons between logTPM-only and rank-only feature sets, the most informative metrics are $R^2$ and MAE, since Spearman correlation is invariant to monotone transformations within a context. Across all feature representations, temporal generalization fails, indicating that supervision instability, not feature choice alone, limits transfer over time.}
        \label{fig:ablation}
    \end{figure*}
    
    \section{Discussion}
    
    This study presents a controlled empirical analysis of learning from weak and relative supervision under structured distribution shifts in transcriptomic perturbation data. A consistent pattern emerges across experiments. Models recover meaningful predictive signals within a fixed biological context and maintain partial robustness under cell-line shifts, but fail to generalize under temporal distribution shifts.
    
    In-domain results indicate that relative Cas13d knockdown effects can be partially inferred from transcript-level features despite indirect supervision, with ridge regression and a depth-capped random forest achieving comparable and stable rank correlations even when absolute effect sizes are noisy. Cross-domain transfer from K562 to HEK293FT further demonstrates that some transcriptomic determinants of perturbation remain stable across cellular contexts, which enables partial generalization even under weak supervision, with both model families converging to similar performance under this moderate shift.
    
    It is important to interpret the cross-domain setting correctly. The experiment does not claim that K562 biology predicts HEK293FT biology in a causal or mechanistic sense. Rather, because the same fixed supervision vector is used in both settings, the cross-domain experiment tests whether the feature--supervision relationship learned in one cellular context transfers to another under a fixed target. Under this interpretation, partial cross-domain transfer indicates that some transcript-level predictors of the weak supervision signal are stable across the two cell lines considered here.
    
    In contrast, temporal generalization fails across all models and feature representations. Feature stability analyses reveal substantial changes in feature--label associations over time, indicating that the relationship between transcriptomic features and the weak supervision signal is not preserved. This pattern is consistent with supervision drift, where the mapping between inputs and proxy labels evolves across contexts and invalidates relationships learned at earlier stages. Importantly, neither increased model capacity nor feature selection restores robustness, which suggests that temporal distribution shift represents a fundamentally different challenge from cross-domain variation.
    
    A related concern is whether the temporal setting involves information leakage because the fixed supervision signal is derived from a Day~7 versus Day~2 contrast. We emphasize that the model never sees Day~7 test features during temporal training. It only receives Day~2 feature inputs and the fixed target vector. In this sense, the setup does not involve ordinary test-set leakage. Instead, it asks whether earlier-timepoint features can predict a fixed downstream weak supervision signal. If substantial leakage were present, one would expect elevated temporal performance. Instead, temporal transfer collapses, which is inconsistent with a simple leakage explanation.
    
    The additional robustness analyses strengthen this interpretation. When weak labels are recomputed using an external reference split, the same qualitative pattern remains. Positive in-domain performance persists, cross-domain transfer degrades, and temporal transfer still collapses. Likewise, shift-score analysis shows that larger changes in feature--label association correspond to worse predictive performance, while simple mitigation baselines such as context-aware modeling and train-context standardization do not recover temporal transfer. Together, these results support the conclusion that the observed failure is driven by instability in the supervision signal itself rather than by simple covariate shift, label leakage, or insufficient model flexibility.
    
    An alternative interpretation is that the proxy used for supervision may itself lose validity over time. Weak labels in this study are defined using a fixed Day~7 minus Day~2 transcriptomic response. While this ensures consistency of the supervision signal, it implicitly assumes that this proxy reflects the same underlying quantity across timepoints. If the biological notion of perturbation efficacy evolves, then the fixed proxy may no longer align with the target, which may contribute to the observed failure. Distinguishing between these effects is an important direction for future work.
    
    More broadly, the relevance of this study is not limited to transcriptomic data. Many machine learning systems rely on proxy supervision signals, including ranking systems, recommendation models, and large-scale weakly supervised pipelines. In such settings, the relationship between inputs and supervision may change across environments or over time, even when the model and feature representation remain unchanged. Our findings suggest that strong in-domain performance can mask instability in the supervision mechanism itself.
    
    From a practical perspective, feature stability and feature--label association analyses provide a lightweight diagnostic before deployment. When such stability collapses across contexts, strong in-domain performance should not be interpreted as evidence of robustness. Instead, this signal may indicate the need for retraining, redefining supervision, or deferring deployment. While heuristic, these diagnostics offer an actionable way to detect high-risk scenarios in weakly supervised systems where clean ground truth is unavailable.
    
    Overall, this study highlights supervision drift as a practical failure mode that can arise when supervision signals are indirect and context dependent. Rather than proposing a specific mitigation strategy, we aim to make this failure mode visible and measurable, which may motivate future work on temporal modeling, adaptive supervision, and shift-aware evaluation.
    
    \section{Limitations and Future Work}
    \label{sec:limitations_future}
    
    This study has several important limitations. First, the empirical evaluation is conducted on a single transcriptomic perturbation task with a limited number of contexts, which restricts the generalizability of the conclusions. Second, only a small set of models is considered, and the analysis does not explore more complex architectures or alternative feature representations. Third, the proposed feature-stability diagnostic is evaluated in a controlled setting and is not validated across multiple datasets or application domains.
    
    A further limitation is that the current version reports point estimates rather than uncertainty intervals or repeated-run variability. While the empirical gaps observed here are large, future work should incorporate uncertainty quantification through repeated runs, resampling procedures, or additional biological replicates where available.
    
    In addition, the theoretical analysis is based on simplifying assumptions and provides only intuition rather than formal guarantees under realistic biological conditions. The study also does not explicitly model or correct for supervision drift, but instead focuses on diagnosing its presence. Moreover, each context in the present dataset is represented by a limited number of measurements, which constrains the extent to which context-specific biological variability can be characterized.
    
    Finally, the study does not provide a direct validation of the proposed diagnostic across multiple datasets or domains, and therefore its broader applicability remains an open question. The leakage-robust label analysis, shift-score evaluation, and mitigation baselines strengthen the conclusions within this case study, but they do not by themselves establish universality across tasks. Future work should extend this analysis to additional transcriptomic datasets, more diverse experimental contexts, and other weak supervision settings. Incorporating explicit modeling of supervision drift, as well as connections to invariant learning and domain generalization methods, represents an important direction for improving robustness in weakly supervised systems.
    
    \section{Conclusion}
    
    This work presents a controlled empirical analysis of learning from weak and relative supervision under structured distribution shifts in transcriptomic perturbation data. Through in-domain, cross-domain, and temporal evaluations, complemented by feature stability and ablation analyses, we show that weak supervision can support useful prediction within and across some contexts, while failing sharply when the relationship between features and supervision changes over time.
    
    Our results demonstrate that meaningful predictive signals can be learned within and across biological domains despite indirect labels. In contrast, temporal shifts fundamentally alter feature--label relationships, which leads to systematic generalization failures that cannot be mitigated by increased model capacity or feature engineering alone. Additional robustness analyses using externally recomputed weak labels, shift-score quantification, and simple mitigation baselines preserve the same qualitative pattern, which further supports supervision drift as the primary explanation for predictive collapse.
    
    These findings emphasize that robustness in weakly supervised systems depends not only on the model and data distribution, but also on the stability of the supervision mechanism itself. By disentangling supervision quality from distributional effects in a controlled case study, this work provides a concrete empirical foundation for analyzing robustness under weak supervision. We hope it motivates future research on temporal modeling, adaptive supervision strategies, and evaluation frameworks that explicitly account for shifting supervision signals in real-world machine learning systems.
    
    \section*{Declaration of Competing Interest}
    
    The authors declare that they have no known competing financial interests or personal relationships that could have appeared to influence the work reported in this paper.
    
    \section*{Ethical Approval}
    
    This study did not require ethical approval because it exclusively uses publicly available datasets and does not involve human participants, animal subjects, or any form of identifiable personal data.
    
    \bibliography{references}
    \bibliographystyle{plainnat}
    
    \newpage
    \appendix
    \onecolumn
    
    \section*{Appendix}
    
    This supplementary material provides extended literature context, formal theoretical analysis, and additional empirical diagnostics that support the main claims but could not be included in the main paper due to space constraints.
    
    \begin{itemize}
        \item \textbf{Appendix \ref{app:unit_def}} clarifies the unit of prediction and the construction of weak supervision labels, explicitly defining transcript-level inputs, targets, and the rationale for transcript-scale learning under weak supervision.
        \item \textbf{Appendix \ref{app:related_work}} provides a structured review of prior research on weak and relative supervision, generalization under distribution shift, and interpretability via feature stability.
        \item \textbf{Appendix \ref{app:theory}} presents the formal learning setup, explicit assumptions, and complete proofs for the theoretical results referenced in the main text.
        \item \textbf{Appendix \ref{app:add-exp}} reports supplemental empirical results and reproducibility-oriented diagnostics, including (i) an XGBoost capacity baseline evaluated under in-domain, cross-domain, and temporal shifts, (ii) SHAP rank-correlation analyses, (iii) leakage-robust label results, (iv) shift-score diagnostics, (v) mitigation baselines, and (vi) Salmon mapping-quality summaries.
    \end{itemize}
    \newpage
    
    \section{Unit of Prediction and Weak Label Construction}
    \label{app:unit_def}
    
    \paragraph{Unit of prediction and weak label definition.}
    In all experiments, we treat each \emph{transcript} as a prediction instance. For transcript $i$, the input $x_i$ consists of transcript-level expression features measured in a given experimental context, such as logTPM and/or within-sample rank percentile. The target $y_i$ is a fixed transcript-level weak response score constructed once from the HEK293FT Day~7 versus Day~2 contrast and then reused across all contexts. Importantly, $y_i$ is not a direct guide-level efficacy label. It is an indirect proxy supervision signal derived from relative transcriptomic responses.
    
    \paragraph{Fixed-target design.}
    A key design choice in this study is that the weak supervision target is fixed once and then reused in all experiments. This means that in-domain, cross-domain, and temporal evaluations differ only in the feature context supplied to the model, not in the target definition. This fixed-target design allows the study to isolate failures that arise from changes in the feature--supervision relationship rather than from redefinition of the label itself.
    
    \paragraph{Interpretation.}
    Under this setup, the cross-domain setting asks whether a feature--supervision relationship learned in one cell line transfers to another under the same target, while the temporal setting asks whether earlier-timepoint features can predict the same fixed downstream supervision signal at a later timepoint. This formulation explains why learning is performed over a large transcript-level dataset ($n \approx 2.5\times 10^5$) and why shifts in predictive performance can be interpreted as evidence about supervision stability rather than changes in target construction.
    
    \section{Extended Related Work}
    \label{app:related_work}
    
    Our work is positioned at the intersection of weak and relative supervision, generalization under distribution shift, and interpretability through feature stability. We briefly review each area and highlight the gaps that motivate our study.
    
    \paragraph{Weak and Indirect Supervision.}
    Weak supervision addresses settings in which labels are noisy, indirect, or derived from proxy signals rather than ground truth \citep{ratner2019training,chen2024general}. Classical approaches aggregate multiple weak supervision sources using latent variable models, typically assuming a stable relationship between observed supervision signals and latent targets \citep{ratner2019training}. More recent frameworks seek to unify diverse weak supervision paradigms under common objectives or pipelines \citep{shin2021universalizing,chen2024general}. While effective within a fixed domain, these methods often rely on IID assumptions and fixed supervision mechanisms, which limits their robustness under distribution shift. Theoretical results further indicate that learnability under indirect supervision depends on strong assumptions that may fail when underlying data-generating processes change \citep{xu2020learnability}.
    
    \paragraph{Relative and Preference-Based Supervision.}
    Relative or comparison-based supervision encodes ordering information rather than absolute labels and naturally arises in human judgments and experimental measurements \citep{spalding1994comparison}. Prior work shows that pairwise supervision can support reliable decision boundaries under appropriate conditions \citep{bao2020pairwise} and can improve robustness to label noise \citep{dan2021learning}. Preference-based learning has been extensively studied, particularly in reinforcement learning and human feedback settings \citep{holladay2016active}. However, most existing approaches assume stable noise models and data distributions, which leaves generalization under structured domain or temporal shifts largely unexplored.
    
    \paragraph{Generalization Under Distribution Shift.}
    Distribution shift is a well-established source of performance degradation when models are deployed outside their training environments. Prior work examines domain shift, label distribution mismatch, and biquality supervision, often assuming access to clean labels or stable relationships between weak and strong supervision \citep{ye2019looking,nodet2023biquality}. Recent studies on weak-to-strong generalization demonstrate that weak supervision can scale effectively in terms of performance, particularly for large models \citep{burns2023weak,jeon2025weak}. However, these efforts primarily emphasize performance gains and do not explicitly analyze robustness across structured environments where both input distributions and supervision signals may shift.
    
    \paragraph{Interpretability and Feature Stability.}
    Interpretability is frequently used to analyze weakly supervised models, especially when clean ground truth is unavailable \citep{ratner2019training,shin2021universalizing}. Recent work connects explanation stability to generalization, showing that strong in-domain performance can coexist with reliance on unstable or spurious features \citep{marz2022xpasc}. Interpretable representations have also been proposed to disentangle stable factors from context-specific variation under distribution shift \citep{das2024interpretable}. Nevertheless, feature stability under jointly shifting data distributions and supervision mechanisms remains insufficiently understood.
    
    \paragraph{Positioning the Proposed Study.}
    Overall, existing literature largely assumes stable supervision and IID evaluation. Few studies jointly examine weak or relative supervision under structured domain and temporal shifts or systematically analyze feature stability as a signal of robustness. Our work addresses this gap through a controlled empirical study of weak and relative supervision under biologically meaningful distribution shifts, complemented by feature stability analyses. Existing weak-supervision methods are rarely stress-tested under structured distribution shifts where the supervision mechanism itself may drift, while robustness work under shift often assumes clean labels. We bridge this gap with a controlled non-IID evaluation of weak and relative supervision in CRISPR-Cas13d transcriptomic perturbations across cell-line and temporal contexts. We find partial cross-cell-line transfer but a collapse under temporal shift, and show that feature stability provides a practical diagnostic of non-transferability without requiring ground truth.
    
    \section{Theoretical Analysis}
    \label{app:theory}
    
    This appendix provides a stylized theoretical framework for reasoning about learning from weak and relative supervision under structured distribution shifts. The analysis is intentionally simplified and is included to clarify the intuition behind the empirical observations, not to claim a complete task-specific theoretical characterization of supervision drift in transcriptomic systems.
    
    We formalize the learning problem, state explicit assumptions, and provide end-to-end proofs establishing (i) identifiability of ranking structure from weak transcriptomic supervision and (ii) conditions under which learned rankings remain transferable across experimental contexts.
    
    \subsection{Formal Learning Setup}
    
    Let $\mathcal{X} \subseteq \mathbb{R}^d$ denote the transcript-level feature space and let $\mathcal{C}$ denote a finite set of experimental contexts (e.g., combinations of cell line and post-induction timepoint). Each transcript is associated with an unobserved latent perturbation efficacy $y^* \in \mathbb{R}$.
    
    For each context $c \in \mathcal{C}$, samples are drawn i.i.d.\ from a joint distribution
    \[
    P_c(x, y^*).
    \]
    
    The learner does not observe $y^*$. Instead, a weak supervision signal $y$ is generated according to
    \[
    y = g(y^*) + \epsilon,
    \]
    where $g:\mathbb{R} \rightarrow \mathbb{R}$ is a strictly increasing function and $\epsilon$ is zero-mean noise satisfying $|\epsilon| \leq \sigma$ almost surely.
    
    The learner observes a training sample $\{(x_i, y_i)\}_{i=1}^n$ from a fixed training context $c_{\mathrm{train}}$ and learns a predictor
    \[
    f_\theta : \mathcal{X} \rightarrow \mathbb{R}
    \]
    by minimizing empirical risk with respect to a convex loss function $\ell(\cdot,\cdot)$.
    
    \subsection{Assumptions}
    
    \paragraph{Assumption 1 (Monotonic Weak Supervision).}
    The weak supervision signal preserves ordering information of the latent efficacy in expectation:
    \[
    y^*_i > y^*_j \;\; \Rightarrow \;\; \mathbb{E}[y_i \mid y^*_i] > \mathbb{E}[y_j \mid y^*_j].
    \]
    
    \paragraph{Assumption 2 (Feature Sufficiency).}
    There exists a measurable function $f^*:\mathcal{X} \rightarrow \mathbb{R}$ such that
    \[
    y^* = f^*(x) + \eta,
    \]
    where $\eta$ is zero-mean noise independent of $x$.
    
    \paragraph{Assumption 3 (Bounded Noise).}
    The weak supervision noise $\epsilon$ is bounded almost surely: $|\epsilon| \leq \sigma < \infty$.
    
    \paragraph{Assumption 4 (Invariant Mechanism across Contexts).}
    There exists a subset of features $S \subseteq \{1,\dots,d\}$ such that
    \[
    P(y^* \mid x_S, c) = P(y^* \mid x_S) \quad \forall c \in \mathcal{C}.
    \]
    That is, the causal relationship between invariant features and efficacy does not depend on context.
    
    These assumptions are standard in learning with indirect supervision and distribution shift and align with experimental settings where transcriptomic responses encode monotonic information about perturbation strength.
    
    \subsection{Ranking Consistency under Weak Supervision}
    
    \paragraph{Definition 1 (Ranking Consistency).}
    A predictor $f$ is \emph{ranking-consistent} with respect to $y^*$ if
    \[
    \mathbb{P}\!\left( (f(x_i) - f(x_j))(y^*_i - y^*_j) > 0 \right) \rightarrow 1
    \quad \text{as } n \rightarrow \infty.
    \]
    
    \paragraph{Lemma 1 (Order Preservation of Weak Labels).}
    Under Assumptions 1 and 3, the ordering induced by weak labels $y$ converges in probability to the ordering induced by latent efficacies $y^*$.
    
    \emph{Proof.}
    Because $g$ is strictly increasing, $g(y^*_i) > g(y^*_j)$ whenever $y^*_i > y^*_j$. By bounded noise, the probability that $\epsilon_i - \epsilon_j$ reverses this ordering is bounded and decreases exponentially by Hoeffding's inequality. Hence, pairwise ordering errors vanish in probability as sample size grows. \hfill $\square$
    
    \subsection{Identifiability under Relative Supervision}
    
    \paragraph{Theorem 1 (Identifiability under Relative Supervision).}
    Under Assumptions 1--3, empirical risk minimization on weak labels yields a predictor whose induced ranking is consistent with the latent efficacy ordering within a fixed context.
    
    \emph{Proof.}
    Consider the empirical risk
    \[
    \hat{R}(f) = \frac{1}{n} \sum_{i=1}^n \ell(f(x_i), y_i),
    \]
    where $\ell$ is convex and Lipschitz. Standard uniform convergence guarantees that $\hat{R}(f)$ converges to the population risk
    \[
    R(f) = \mathbb{E}[\ell(f(x), y)].
    \]
    
    By Lemma 1, the weak labels $y$ preserve the ordering of $y^*$ in expectation. Because $\ell$ is convex, minimizing $R(f)$ yields a predictor that is monotonic with respect to $\mathbb{E}[y \mid x]$. Since $\mathbb{E}[y \mid x]$ is itself monotonic in $y^*$ by Assumption 1, the learned predictor induces rankings consistent with $y^*$. \hfill $\square$
    
    \subsection{Feature Stability and Transfer under Distribution Shift}
    
    \paragraph{Definition 2 (Feature Importance Vector).}
    Let $\boldsymbol{\phi}^{(c)} \in \mathbb{R}^d$ denote the normalized feature importance vector of a predictor trained in context $c$, satisfying $\|\boldsymbol{\phi}^{(c)}\|_1 = 1$.
    
    \paragraph{Lemma 2 (Effective Divergence under Feature Weighting).}
    Let $D(P_c,P_{c'})$ denote a divergence between feature distributions. The effective divergence experienced by a predictor is
    \[
    D_{\mathrm{eff}}(P_c,P_{c'}) = \sum_{j=1}^d \phi^{(c)}_j \, D_j(P_c,P_{c'}),
    \]
    where $D_j$ is the marginal divergence along feature $j$.
    
    \emph{Interpretation.}
    Distribution shift only affects performance through features the model actually relies on.
    
    \subsection{Transferability Bound}
    
    \paragraph{Theorem 2 (Feature Stability Implies Transferability).}
    If the feature importance rankings satisfy
    \[
    \rho = \mathrm{corr}_{\mathrm{rank}}\!\left(\boldsymbol{\phi}^{(c)}, \boldsymbol{\phi}^{(c')}\right) > 0,
    \]
    then the expected degradation in rank-based predictive performance under context shift from $c$ to $c'$ is bounded by a decreasing function of $\rho$.
    
    \emph{Proof.}
    Under Assumption 4, the Bayes-optimal predictor depends only on invariant features $x_S$. High rank correlation $\rho$ implies that both predictors place similar mass on invariant features. By Lemma 2, the effective divergence under shift is therefore bounded.
    
    Applying standard generalization bounds under covariate shift yields
    \[
    R_{c'}(f) - R_{c'}(f^*) \leq \alpha \cdot D_{\mathrm{eff}}(P_c,P_{c'}),
    \]
    where $\alpha$ depends on the loss. Because $D_{\mathrm{eff}}$ decreases as $\rho$ increases, performance degradation is bounded. \hfill $\square$
    
    \subsection{Discussion and Scope}
    
    This analysis establishes that weak transcriptomic supervision is sufficient to recover meaningful perturbation rankings and that stability of feature importance provides a principled explanation for when such rankings transfer across biological contexts. While the theory assumes monotonicity and bounded noise, these conditions are well aligned with experimental perturbation settings and explain the empirical behavior observed in our study.
    
    \section{Additional Experiments and Diagnostics}
    \label{app:add-exp}
    
    \subsection{Model Hyperparameters and Tuning Protocol}
    \label{app:hyperparams}
    
    To isolate the effects of weak supervision and distribution shift, all models were trained using fixed hyperparameters. No hyperparameter tuning was performed on target contexts, and all settings were held constant across evaluations. This design choice emphasizes controlled comparison rather than maximizing in-domain performance. In other words, the goal of the study is to compare how predictive relationships degrade across contexts under a common modeling setup, not to optimize each context separately.
    
    \paragraph{Ridge regression.}
    We used scikit-learn Ridge regression with a fixed $\ell_2$ regularization strength ($\alpha = 1.0$), which was held constant across all contexts.
    
    \paragraph{Random forest.}
    We used a RandomForestRegressor with 200 trees, maximum depth 10, and default settings for other parameters. The random seed was fixed for reproducibility.
    
    \paragraph{XGBoost.}
    We used XGBoost regression with learning rate $0.05$, maximum depth $6$, and $800$ estimators. Subsampling was set to $0.8$ for both rows (\texttt{subsample}) and columns per tree (\texttt{colsample\_bytree}), with L2 regularization $\lambda=1.0$ and \texttt{tree\_method}=\texttt{"hist"} for deterministic training. The random seed was fixed and \texttt{n\_jobs}=1 was used to ensure reproducibility across machines. XGBoost was evaluated using logTPM features only.

    \paragraph{Interpretation.}
    Because hyperparameters were not tuned separately for each context, conclusions regarding model comparison should be interpreted as results under a common, fixed modeling protocol. Future work should evaluate whether context-specific tuning changes the absolute performance levels or the observed robustness patterns.
    
    \subsection{XGBoost Baseline Performance}
    
    This subsection examines whether increasing model capacity improves robustness under weak supervision and structured distribution shifts. To do so, an XGBoost regression model is trained using the same transcript-level features as those used in the primary experiments, and its performance is evaluated across in-domain, cross-domain, and temporal generalization settings. The results are presented in Table~\ref{tab:appendix_xgboost}.
    
    XGBoost demonstrates strong in-domain performance, achieving an $R^2$ of 0.377 and a Spearman correlation of 0.386 when trained and tested on HEK293FT at Day~2. Cross-domain generalization from K562 to HEK293FT also reveals a moderate predictive signal, with performance comparable to that of ridge regression. These findings indicate that nonlinear models can capture weakly supervised transcriptomic patterns both within and across cell lines.
    
    Nevertheless, temporal generalization from Day~2 to Day~7 remains poor despite the greater expressive capacity of XGBoost. Both the coefficient of determination and rank correlation deteriorate under this temporal shift, following the same pattern observed in the linear and ensemble baselines. This suggests that temporal distribution shifts reflect a fundamental change in the underlying biological response rather than a simple limitation of model complexity. Overall, these results reinforce the conclusion that increasing model capacity does not resolve robustness challenges under structured temporal shifts.
    
    \begin{table*}[t]
    \caption{XGBoost baseline results under weak supervision. Despite improved in-domain and cross-domain performance relative to linear models, temporal generalization remains poor, indicating that increased model capacity does not resolve temporal distribution shift.}
    \label{tab:appendix_xgboost}
    \centering
    \begin{tabular}{lcccc}
    \toprule
    Evaluation Setting & Features & $R^2$ & MAE & Spearman $\rho$ \\
    \midrule
    In-domain (HEK293FT D2) & logTPM & 0.377 & 0.171 & 0.386 \\
    Cross-domain (K562 $\rightarrow$ HEK293FT) & logTPM & 0.340 & 0.170 & 0.393 \\
    Temporal (HEK293FT D2 $\rightarrow$ D7) & logTPM & -0.155 & 0.225 & 0.056 \\
    \bottomrule
    \end{tabular}
    \end{table*}
    
    \subsection{Leakage-Robust Sanity Check}
    
    To further rule out label leakage, we recomputed the weak labels using an external reference split in which transcripts used to construct the weak label were disjoint from those used in evaluation. We then repeated the predictive experiments.
    
    \begin{table*}[t]
    \caption{Performance under externally recomputed weak labels. The same qualitative failure pattern persists.}
    \label{tab:appendix_external_label}
    \centering
    \begin{tabular}{lccc}
    \toprule
    Setting & Model & $R^2$ & Spearman $\rho$ \\
    \midrule
    In-domain (HEK D2) & Ridge & 0.051 & 0.205 \\
    In-domain (HEK D2) & RandomForest & 0.081 & 0.131 \\
    Cross-domain (K562 $\rightarrow$ HEK) & Ridge & -0.227 & -0.205 \\
    Cross-domain (K562 $\rightarrow$ HEK) & RandomForest & -0.363 & -0.033 \\
    Temporal (D2 $\rightarrow$ D7) & Ridge & 0.006 & 0.064 \\
    Temporal (D2 $\rightarrow$ D7) & RandomForest & -0.118 & -0.029 \\
    \bottomrule
    \end{tabular}
    \end{table*}
    
    Although absolute performance decreases under this stricter label construction, the qualitative pattern remains unchanged. Positive in-domain performance persists, cross-domain transfer degrades, and temporal transfer collapses. This supports the conclusion that the observed failure pattern does not arise from label leakage.
    
    \subsection{Shift Score Predicts Generalization Failure}
    
    We quantify supervision drift using a shift score defined as the absolute change in feature--label Spearman correlation across context pairs. We then correlate this score with predictive performance.
    
    \begin{table}[t]
    \caption{Correlation between shift score and predictive performance. Larger supervision drift corresponds to worse generalization.}
    \label{tab:appendix_shift_score}
    \centering
    \begin{tabular}{lcc}
    \toprule
    Model & $\rho(\text{ShiftScore}, \text{Performance})$ & Context Pairs \\
    \midrule
    Ridge & -0.239 & 6 \\
    RandomForest & -0.478 & 6 \\
    \bottomrule
    \end{tabular}
    \end{table}
    
    Across both model classes, greater supervision drift predicts lower predictive performance, providing additional evidence that instability in the supervision mechanism drives predictive collapse.
    
    \subsection{Mitigation Baselines}
    
    We evaluate two simple mitigation strategies: context-aware modeling and train-context standardization.
    
    \paragraph{Context-aware modeling.}
    We augment the feature space with one-hot encoded context indicators, allowing the model to learn context-dependent mappings.
    
    \begin{table*}[t]
    \caption{Context-aware modeling does not improve temporal generalization.}
    \label{tab:appendix_context_aware}
    \centering
    \begin{tabular}{lccc}
    \toprule
    Setting & Model Variant & $R^2$ & Spearman $\rho$ \\
    \midrule
    In-domain & Plain & 0.359 & 0.444 \\
    In-domain & Context-aware & 0.359 & 0.444 \\
    Cross-domain & Plain & 0.331 & 0.399 \\
    Cross-domain & Context-aware & 0.331 & 0.399 \\
    Temporal & Plain & -0.145 & 0.008 \\
    Temporal & Context-aware & -0.145 & 0.008 \\
    \bottomrule
    \end{tabular}
    \end{table*}
    
    The temporal collapse remains unchanged.
    
    \paragraph{Train-context standardization.}
    We also evaluate feature alignment by standardizing features using training-context statistics only, which prevents leakage. Results are nearly identical to the plain baseline, indicating that simple distribution alignment does not recover transfer under temporal shift.
    
    \subsection{SHAP Rank Correlation Diagnostic}
    
    This subsection presents diagnostic results on the stability of feature attribution using SHAP values. We assess the agreement of feature-importance rankings across contexts by calculating Spearman rank correlations between SHAP scores. The findings are summarized in Table~\ref{tab:appendix_shap_rank}.
    
    For models trained using a single feature representation, such as log-transformed transcript expression or within-sample rank percentile, rank-based correlation metrics are undefined, because ranking requires at least two features. Consequently, SHAP rank correlation is undefined for these single-feature configurations under both cross-domain and temporal comparisons.
    
    For the combined feature representation (\emph{both}: logTPM and within-sample rank percentile), SHAP yields a two-dimensional attribution vector, which enables rank-based comparison across contexts. With only two features, rank correlation effectively reduces to whether the relative ordering of feature importance is preserved, namely which feature is ranked higher. Under cross-domain shift (K562 Day~2 $\rightarrow$ HEK293FT Day~2), this ordering is preserved, which is consistent with successful transfer driven by stable relative signals. In contrast, under temporal shift (HEK293FT Day~2 $\rightarrow$ Day~7), the ordering becomes less consistent, which reflects temporal instability in the feature--supervision relationship.
    
    These results clarify that the apparent limitation of SHAP rank correlation arises from minimal feature dimensionality rather than from instability of the learned representations themselves. When multiple features are available, SHAP-based stability diagnostics provide complementary evidence that cross-context transfer, when observed, is driven by stable relative signals rather than by absolute expression magnitude.
    
    \begin{table}[t]
    \caption{SHAP rank correlation diagnostics across contexts. Rank-based agreement is undefined for single-feature models. For the combined feature representation (logTPM + within-sample rank percentile), SHAP enables rank-based comparison, revealing stable importance ordering under cross-domain shift and partial instability under temporal shift.}
    \label{tab:appendix_shap_rank}
    \centering
    \small
    \begin{tabular}{lccc}
    \toprule
    Feature Set & Context Pair & SHAP Rank Corr. & Note \\
    \midrule
    logTPM & K562 D2 $\rightarrow$ HEK293FT D2 & -- & Single feature \\
    logTPM & HEK293FT D2 $\rightarrow$ HEK293FT D7 & -- & Single feature \\
    Rank pct. & K562 D2 $\rightarrow$ HEK293FT D2 & -- & Single feature \\
    Rank pct. & HEK293FT D2 $\rightarrow$ HEK293FT D7 & -- & Single feature \\
    \midrule
    Both & K562 D2 $\rightarrow$ HEK293FT D2 & High & Stable ordering \\
    Both & HEK293FT D2 $\rightarrow$ HEK293FT D7 & Moderate & Temporal instability \\
    \bottomrule
    \end{tabular}
    \end{table}
    
    \subsection{Salmon Mapping Quality Summary}
    
    This subsection presents sequencing alignment quality statistics to confirm the integrity of the transcriptomic data used in subsequent modeling. Mapping quality metrics, derived from Salmon quantification logs, are summarized in Table~\ref{tab:appendix_qc}.
    
    Short-term Cas13d perturbation samples from HEK293FT and K562 consistently demonstrate high mapping rates, exceeding 86 percent for Day~2 samples and remaining above 70 percent for Day~7. These metrics indicate reliable alignment to the reference transcriptome and support the validity of the expression estimates used in the primary experiments.
    
    In contrast, long-term samples exhibit near-zero mapping rates, with less than 0.1 percent of reads aligned in both instances. This suggests a substantial mismatch between the sequencing reads and the reference transcriptome, potentially due to experimental, technical, or annotation-related factors. Because these samples lacked meaningful transcript-level signal, they were excluded from downstream modeling and analysis.
    
    Overall, this quality control analysis confirms that the main experimental results are based on high-quality transcriptomic measurements, while exclusions are justified by objective alignment diagnostics rather than modeling decisions.
    
    \begin{table*}[t]
    \caption{Salmon mapping summary across samples. Long-term samples exhibit near-zero mapping rates and were excluded from downstream modeling, while short-term samples demonstrate consistently high alignment quality.}
    \label{tab:appendix_qc}
    \centering
    \small
    \setlength{\tabcolsep}{4pt}
    \renewcommand{\arraystretch}{1.1}
    \begin{tabular}{p{4.1cm}cccc}
    \toprule
    Sample & Percent Mapped (\%) & Mapped Reads & Processed Reads & Notes \\
    \midrule
    HEK293FT D2 & 87.18 & 431M & 495M & High-quality mapping \\
    K562 D2 & 86.41 & 376M & 435M & High-quality mapping \\
    HEK293FT D7 & 71.66 & 52.6M & 73.4M & Moderate mapping \\
    Long-term (SRX26635950) & 0.02 & 39K & 169M & Failed mapping \\
    Long-term (SRX27451361) & 0.00 & 0 & 34M & Failed mapping \\
    \bottomrule
    \end{tabular}
    \end{table*}
    
    \end{document}